\newtheorem{theorem}{Theorem}
\newcommand\V[1]  { \mathbf{#1} }
\newcommand\B[1]  { \boldsymbol{#1} }
\newcommand\set[1] {\mathcal{#1}}
\newcommand\up[1] {\mathrm{#1}}
\acrodef{RFF}[RFF]{random Fourier features}
\acrodef{RRM}[RRM]{robust risk minimization}
\acrodef{MRC}[MRC]{minimax risk classifier}
\acrodef{RFE}[RFE]{recursive feature elimination}
\acrodef{SVM}[SVM]{support vector machine}
\acrodef{LP}[LP]{linear program}
\acrodef{LR}[LR]{logistic regression}
\acrodef{SOA}[SOA]{state-of-the-art}
\acrodef{DT}[DT]{decision tree}
\acrodef{MRMR}[MRMR]{minimum redundancy maximum relevancy}
\acrodef{MI}[MI]{mutual information}
\acrodef{ANOVA}[ANOVA]{analysis of variance}
\title{Efficient Learning of Minimax Risk Classifiers in High Dimensions}
\author[1]{\href{mailto:<kbondugula@bcamath.org>?Subject=Your UAI 2023 paper}{Kartheek Bondugula}{}}
\author[1,2]{\href{mailto:<smazuelas@bcamath.org>?Subject=Your UAI 2023 paper}{Santiago Mazuelas}{}}
\author[1]{\href{mailto:<aperez@bcamath.org>?Subject=Your UAI 2023 paper}{Aritz P\'{e}rez}{}}
\affil[1]{%
    Basque Center for Applied Mathematics (BCAM)\\
    Bilbao, Spain
}
\affil[2]{%
    IKERBASQUE-Basque Foundation for Science
}
\begin{document}
\maketitle

\begin{abstract}
\mbox{High-dimensional} data is common in multiple areas, such as health care and genomics, where the number of features can be tens of thousands. In such scenarios, the large number of features often leads to inefficient learning. Constraint generation methods have recently enabled efficient learning of L1-regularized \acp{SVM}. In this paper, we leverage such methods to obtain an efficient learning algorithm for the recently proposed \acp{MRC}. The proposed iterative algorithm also provides a sequence of worst-case error probabilities and performs feature selection. Experiments on multiple \mbox{high-dimensional} datasets show that the proposed algorithm is efficient in \mbox{high-dimensional} scenarios. In addition, the worst-case error probability provides useful information about the classifier performance, and the features selected by the algorithm are competitive with the state-of-the-art.
\end{abstract}

\section{Introduction}\label{sec:intro}
\mbox{High-dimensional data} is common in multiple areas such as health care and genomics. A typical example of high-dimensional supervised classification problem in genomics is to separate healthy patients from cancer patients based on gene expression data with tens of thousands of features \citep{GuyIsaEtal:02}. In addition to \mbox{high-dimensional} raw data, learning methods often perform a \mbox{high-dimensional} representation of the input data vector in order to improve the classification performance \citep{SonRatSch:06, RahRec:08, LiuYon:17}. Learning in such \mbox{high-dimensional} settings often leads to highly complex optimization processes because the number of variables involved in the optimization increases with the number of features \citep{ShiYin:10, YuaGuo:12}. 

In addition to \mbox{high-dimensional} data, a limited number of samples is common in the above-mentioned applications e.g., tens of thousands of features but less than 100 patients \citep{GuyEli:03, BroGavEtal:12}. In such scenarios, the conventional performance assessment based on cross-validation can be unreliable \citep{Var:18, vabalas2019}. In addition, these cross-validation estimates also increase the computational cost as they require learning multiple classifiers.

Multiple methods have been proposed to improve the learning efficiency in high dimensions (see e.g., \cite{YuaGuo:12}). These methods are based on several approaches such as coordinate descent \citep{HsiCho:08, YuHsi:11}, interior-point method \citep{koh2007interior}, and stochastic subgradient \citep{shalev2007pegasos}. In addition, multiple classification and regression methods exploit the parameters’ sparsity induced by convex penalties and regularization terms in high-dimensional settings \citep{MigKrzLu:20, CelMon:22}. Recently, constraint generation techniques have enabled the efficient learning of L1-regularized \acfp{SVM} for cases with a large number of features \citep{DedAntEtal:22}. These techniques obtain improved efficiency for the \ac{LP} corresponding with binary L1-\acp{SVM} due to the sparsity in the solution. 

In this paper, we present a learning algorithm for the recently proposed \acfp{MRC} \citep{MazZanPer:20, MazRomGrun:23}. The presented algorithm provides efficient learning in high dimensions, obtains worst-case error probabilities that can serve to assess performance, and performs feature selection. Specifically, the main contributions in the paper are as follows.
\vspace{-0.24cm}
\begin{itemize}[itemsep=0pt]
\item We present a learning algorithm for \acp{MRC} that utilizes constraint generation methods to significantly improve the efficiency in high dimensions.
\item The presented algorithm utilizes a greedy feature selection approach that achieves a fast decrease in worst-case error probability while using a small number of features.
\item Our theoretical results show that the proposed algorithm obtains a sequence of classifiers with decreasing worst-case error probabilities that converges to the solution obtained using all the features.
\item Using multiple benchmark datasets, we experimentally show that the proposed algorithm performs efficient learning in high-dimensional settings. In addition, the algorithm obtains an error assessment for the classifier and performs effective feature selection.
\end{itemize}

\paragraph{Notations:} For a set $\set{S}$, we denote its complement as $\set{S}^{\up{c}}$ and its cardinality as $|\set{S}|$; bold lowercase and uppercase letters represent vectors and matrices, respectively; vectors subscripted by a set of indices represent subvectors obtained by the components corresponding to the indices in the set; matrices subscripted by a set of indices represent submatrices obtained by the columns corresponding to the indices in the set; $\V{I}$ denotes the identity matrix; $\mathds{1} \{\cdot\}$ denotes the indicator function; $\B{1}$ denotes a vector of ones; for a vector $\V{v}$, $|\V{v}|$ and $(\V{v})_+$, denote its component-wise absolute value and positive part, respectively; $\|.\|_1$ denotes the 1-norm of its argument; $\otimes$ denotes the Kronecker product; $\preceq$ and $\succeq$ denote vector inequalities; $\mathbb{E}_{\up{p}}\{\,\cdot\,\}$ denotes the expectation of its argument with respect to distribution $\up{p}$; and $\V{e}_{i}$ denotes the $i$-th vector in a standard basis.

\section{Preliminaries}
In this section, we describe the setting addressed in the paper and MRC methods that minimize the worst-case error probability.

\subsection{Problem formulation}
Supervised classification uses instance-label pairs to determine classification rules that assign labels to instances. We denote by $\mathcal{X}$ and $\mathcal{Y}$ the sets of instances and labels, respectively, with $\mathcal{Y}$ represented by $\{1, 2, \ldots, |\mathcal{Y}|\}$. We denote by $\text{T}(\mathcal{X}, \mathcal{Y})$ the set of all classification rules (both randomized and deterministic) and we denote by $\up{h}(y|x)$ the probability with which rule $\up{h} \in \text{T}(\mathcal{X}, \mathcal{Y})$ assigns label $y \in \mathcal{Y}$ to instance $x \in \mathcal{X}$ ($\up{h}(y|x) \in \{0, 1\}$ for deterministic classification rules). In addition, we denote by $\Delta(\mathcal{X}\times\mathcal{Y})$ the set of probability distributions on $\mathcal{X} \times \mathcal{Y}$ and by $\ell(\up{h}, \up{p})$ the expected 0-1 loss of  the classification rule $\up{h}\in \text{T}(\mathcal{X}, \mathcal{Y})$ with respect to distribution $\up{p} \in \Delta(\mathcal{X}\times\mathcal{Y})$. If $\up{p}^*\in \Delta(\set{X}\times\set{Y})$ is the underlying distribution of the instance-label pairs, then $\ell(\up{h},\up{p}^*)$ is the classification error probability or classification risk of rule $\up{h}$ denoted in the following as $\mathcal{R}(\up{h})$, that is, $\mathcal{R}(\up{h}) \coloneqq \ell(\up{h},\up{p}^*)$.
 
Instance-label pairs can be represented as real vectors by using a feature mapping $\Phi: \set{X}\times \set{Y} \rightarrow \mathbb{R}^m$.  The most common way to define such feature mapping is using multiple features over instances together with one-hot encodings of labels as follows (see e.g., \cite{MehRos:18})
\begin{align}
\label{eq:feature}
 \Phi(x, y)  = & \, \V{e}_{y} \otimes {\Psi}(x)=  \big{[}\mathds{1}\{y = 1\} {\Psi}(x)^{\text{T}}, \\
 & \mathds{1}\{y = 2\} {\Psi}(x)^{\text{T}}, \ldots, \mathds{1}\{y = |\mathcal{Y}|\} {\Psi}(x)^{\text{T}}\big{]}^{\text{T}}\nonumber
\end{align}
 where the map $\Psi : \mathcal{X} \rightarrow \mathbb{R}^d$ represents instances as real vectors of size $d$. This map can be just the identity $\Psi(x)=x$ or given by a feature representation such as that provided by \ac{RFF} \citep{RahRec:08}, that is
\begin{align}
\label{eq:RRF}
\Psi(x) = \big{[}&\cos(\V{u}_1^{\text{T}} \V{x}), \cos(\V{u}_2^{\text{T}} \V{x}),..., \cos(\V{u}_D^{\text{T}} \V{x}), \\
&\sin(\V{u}_1^{\text{T}} \V{x}), \, \sin(\V{u}_2^{\text{T}} \V{x}), ..., \, \sin(\V{u}_D^{\text{T}} \V{x})\big{]}^{\text{T}} \nonumber
\end{align}
for $D$ random Gaussian vectors $\V{u}_1, \V{u}_2, ..., \V{u}_D \sim N(0, \gamma \V{I})$ with covariance given by the scaling parameter $\gamma$ (see e.g., \cite{RahRec:08}). 

In this paper we consider scenarios in which the features' dimensionality ($d$) is large, and propose efficient learning methods for the recently presented \acp{MRC}, which are briefly described in the following.

\subsection{Minimax risk classifiers}

\acp{MRC} are classification rules that minimize the worst-case error probability over distributions in an uncertainty set \citep{MazZanPer:20, MazSheYua:22, MazRomGrun:23}. Specifically, such rules are solutions to the minimax risk problem defined as
 \begin{equation}
 \label{eq:minmaxrisk}
\underset{\up{h} \in \text{T}(\mathcal{X}, \mathcal{Y})}{\min} \, \underset{\up{p} \in \mathcal{U}}{\max} \; \ell(\up{h}, \up{p})
\end{equation}
where $\set{U}$ is an uncertainty set of distributions determined by expectation estimates as
\begin{equation}
\label{eq:us}
\mathcal{U} = \{\up{p} \in \Delta (\mathcal{X} \times \mathcal{Y}) : \left|\mathbb{E}_{\up{p}}\{\Phi(x, y)\}  - \B{\tau} \right| \preceq \B{\lambda}\}
 \end{equation}
where $\B{\tau}$ denotes the mean vector of expectation estimates corresponding with the feature mapping $\Phi$, and $\B{\lambda} \succeq \V{0}$ is a confidence vector that accounts for inaccuracies in the estimate. The mean and confidence vectors can be obtained from the training samples $\{(x_i,y_i)\}_{i=1}^n$ as
\begin{align}
\B{\tau}=\frac{1}{n}\sum_{i=1}^{n}\Phi(x_i,y_i), \  \B{\lambda}= \frac{\V{s}}{\sqrt{n}}
\end{align}
where $\V{s}$ denotes the vector formed by the component-wise sample standard deviations of $\{\Phi(x_i,y_i)\}_{i=1}^n$.

As described in  \cite{MazZanPer:20, MazSheYua:22, MazRomGrun:23} using the expected 0-1 loss $\ell$, the MRC rule $\up{h}$ solution of~\eqref{eq:minmaxrisk} is given by a linear combination of the feature mapping with coefficients given by a vector $\B{\mu}^* \in \mathbb{R}^{m}$. Specifically, the \ac{MRC} rule $\up{h}$ given by $\B{\mu}^{*}$ assigns label $\hat{y} \in \mathcal{Y}$ to instance $x \in \mathcal{X}$ with probability
\begin{equation}
\label{eq:prob}
\up{h}(\hat{y}|x) = \left\{\begin{matrix}\left(\Phi(x, \hat{y})^{\text{T}} \B{\mu}^*  -  \varphi(\B{\mu}^*) \right)_+ /d_{x} & \text{if} \, d_{x} \neq 0\\
1/|\mathcal{Y}| & \text{if} \, d_{x} = 0 \end{matrix}\right.
\end{equation}
\begin{align}
\nonumber
\label{eq:varphi}
\hspace{-0.2cm}\mbox{where}\hspace{0.3cm}\varphi(\B{\mu}) &= {\underset{x \in \mathcal{X}, \set{C} \subseteq \mathcal{Y}}{\max}} \Big{(}\frac{\sum_{y \in \set{C}}\Phi(x, y)^{\text{T}}\B{\mu} - 1}{|\set{C}|}\Big{)}\hspace{0.2cm}\\
d_{x}&=\sum_{y \in \mathcal{Y}} \left(\Phi(x, y)^{\text{T}} \B{\mu}^{*} - \varphi(\B{\mu}^{*}) \right)_+.\nonumber
\end{align}
Note that the corresponding deterministic classification rule $\up{h}^{\up{d}}$ assigns a label that maximizes the probability in \eqref{eq:prob} and is given by \mbox{${\arg} \max_{y \in \set{Y}} \up{h}(y|x) = \arg \max_{y \in \set{Y}} {\Phi}(x,y)^{\text{T}} \B{\mu}^*$}. In addition, the classification risk of such classification rule $\up{h}^{\up{d}}$ is bounded by twice the error probability of the \ac{MRC} rule $\up{h}$, that is,
\begin{equation}
\label{eq:cl_risk}
\mathcal{R}({\up{h}^{\up{d}}}) \leq 2\mathcal{R}(\up{h})
\end{equation}
since $1-{\up{h}^{\up{d}}}(y|x) \leq 2(1-\up{h}(y|x))$ for any $x \in \mathcal{X}$, $y \in \mathcal{Y}$.

The vector $\B{\mu}^*$ that determines the \mbox{0-1} \ac{MRC} rule corresponding to \eqref{eq:minmaxrisk} is obtained by solving the convex optimization problem \citep{MazZanPer:20, MazRomGrun:23}
\begin{equation}
\label{eq:mrc}
\up{R}^{*} = \underset{\B{\mu}}{\min} \; 1 - \B{\tau}^{\text{T}} \B{\mu} + \varphi(\B{\mu}) + \B{\lambda}^{\text{T}} | \B{\mu} |.
\end{equation}
\noindent
The minimum value $\up{R}^{*}$ of \eqref{eq:mrc} equals the minimax risk value of \eqref{eq:minmaxrisk}. Hence, it is the \ac{MRC}'s worst-case error probability for distributions included in the uncertainty set $\mathcal{U}$, and is an upper bound on the \ac{MRC}'s classification error when the underlying distribution is included in such uncertainty set.

The convex optimization problem \eqref{eq:mrc} of \acp{MRC} is \mbox{L1-penalized} and leads to sparsity in the coefficients $\B{\mu}^{*}$ corresponding to the features in $\Phi$. This sparsity in the coefficients implies that only a subset of features are sufficient to obtain the optimal worst-case error probability for \acp{MRC}. Therefore, in practice, efficient learning can be achieved by using the relevant subset of features in $\Phi$ to solve the \ac{MRC} optimization problem. This intuition leads to the efficient learning algorithm presented in this following.

\section{Efficient learning of MRCs in high dimensions}\label{sec:math}
This section describes the proposed learning algorithm based on constraint generation over an LP formulation of MRCs learning \eqref{eq:mrc}.

\subsection{LP formulation for \mbox{0-1} MRCs}

The theorem below presents an \ac{LP} formulation (primal and dual) for the \mbox{0-1} \ac{MRC} convex optimization problem \eqref{eq:mrc}.
\begin{theorem}\label{th:linear_form}
Let $\mathcal{S}$ be the set of pairs of instances and labels' subsets, i.e., $\mathcal{S}=\{(x,\set{C}):\  x \in \set{X}, \set{C} \subseteq \set{Y}, \set{C} \neq \emptyset\}$, and for each $i=1,2\ldots,|\mathcal{S}|$ corresponding to pair $(x,\mathcal{C})$ let
\begin{align}
\label{def:matrices}
\V{g}_{i} = \frac{\sum_{y \in \mathcal{C}}\Phi\left(x,y\right)}{|\mathcal{C}|}, \ 
 \up{b}_{i} = \frac{1}{\mathcal{|C|}}-1.
\end{align} 
Then, the \mbox{0-1} \ac{MRC} problem \eqref{eq:mrc} is equivalent to the \ac{LP}
\begin{align}
\arraycolsep=1.4pt
\label{eq:mrc_linear}
\begin{array}{ccc}
\mathcal{P}: & \underset{\substack{\B{\mu}_1, \B{\mu}_2, \nu}}{\min} & - (\B{\tau} - \B{\lambda})^{\up{T}}\B{\mu}_1 + (\B{\tau} + \B{\lambda})^{\up{T}}\B{\mu}_2 + \nu \\ 
 & \text{s.t.} & \B{\up{F}}(\B{\mu}_1 - \B{\mu}_2) - \nu\B{1} \preceq \B{\up{b}} \\
 & & \B{\mu}_1, \B{\mu}_2 \succeq 0
\end{array}
\end{align}
where the $i$-th row of matrix $\V{F} \in \mathbb{R}^{|\set{S}| \times m}$ is given by $\V{g}_i^{\up{T}}$, and the $i$-th component of vector $\V{b} \in \mathbb{R}^{|\mathcal{S}|}$ is given by $\up{b}_i$. In addition, the Lagrange dual of \eqref{eq:mrc_linear} is
\begin{align}
\label{eq:dual_mrc}
\def\arraystretch{1.2}
\begin{array}{ccc}
\mathcal{D}: & \underset{\B{\alpha}}{\max} & -\B{\up{b}}^{\up{T}}\B{\alpha} \\
& \text{s.t.} & \B{\tau} - \B{\lambda} \ \preceq \ \B{\up{F}}^{\up{T}}\B{\alpha} \ \preceq  \ \B{\tau} + \B{\lambda} \\
& & \B{1}^{\up{T}}\B{\alpha} = 1, \ \B{\alpha} \succeq 0
\end{array}
\end{align}
and if $\B{\mu}_{1}^{*}$, $\B{\mu}_{2}^{*}$, $\nu^{*}$ is a solution of \eqref{eq:mrc_linear}, we have that \mbox{$\B{\mu^{*}} = \B{\mu}_1^{*} - \B{\mu}_2^{*}$} is a solution of \eqref{eq:mrc}.
\end{theorem}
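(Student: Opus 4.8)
The plan is to transform \eqref{eq:mrc} into the primal LP $\mathcal{P}$ by two standard reductions, then obtain $\mathcal{D}$ by forming the Lagrangian of $\mathcal{P}$, and finally invoke LP strong duality so that all optimal values coincide.

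First I would linearize the piecewise-linear convex penalty $\varphi$. With the notation of \eqref{def:matrices}, each term inside the maximum equals $\V{g}_i^{\up{T}}\B{\mu} - 1/|\set{C}| = \V{g}_i^{\up{T}}\B{\mu} - \up{b}_i - 1$, so $\varphi(\B{\mu}) = \max_i(\V{g}_i^{\up{T}}\B{\mu} - \up{b}_i) - 1$, a maximum of finitely many affine functions indexed by $\set{S}$. Introducing a scalar epigraph variable $\nu$ with $\nu \ge \V{g}_i^{\up{T}}\B{\mu} - \up{b}_i$ for every $i$ — i.e. the constraint $\V{F}\B{\mu} - \nu\B{1} \preceq \B{b}$ — forces $\nu \ge \varphi(\B{\mu}) + 1$ with equality at optimality, so that the constant $1$ together with $\varphi(\B{\mu})$ is exactly encoded by $\nu$ in the objective. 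This is the step that most needs care: the offsets $\up{b}_i = 1/|\set{C}| - 1$ must line up so that $\nu$ captures $1 + \varphi(\B{\mu})$ and not $\varphi(\B{\mu})$ alone.

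Second I would treat the $\ell_1$ term through the positive/negative-part split $\B{\mu} = \B{\mu}_1 - \B{\mu}_2$ with $\B{\mu}_1, \B{\mu}_2 \succeq \V{0}$. For any $\B{\mu}$, setting $\B{\mu}_1 = (\B{\mu})_+$ and $\B{\mu}_2 = (-\B{\mu})_+$ gives $\B{\mu}_1 + \B{\mu}_2 = |\B{\mu}|$, hence $-\B{\tau}^{\up{T}}\B{\mu} + \B{\lambda}^{\up{T}}|\B{\mu}| = -(\B{\tau} - \B{\lambda})^{\up{T}}\B{\mu}_1 + (\B{\tau} + \B{\lambda})^{\up{T}}\B{\mu}_2$, producing a feasible point of $\mathcal{P}$ with equal objective, so the value of $\mathcal{P}$ is at most that of \eqref{eq:mrc}. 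Conversely, any $\mathcal{P}$-feasible $(\B{\mu}_1, \B{\mu}_2)$ satisfies the componentwise inequality $|\B{\mu}_1 - \B{\mu}_2| \preceq \B{\mu}_1 + \B{\mu}_2$, and since $\B{\lambda} \succeq \V{0}$ this gives $\B{\lambda}^{\up{T}}|\B{\mu}| \le \B{\lambda}^{\up{T}}(\B{\mu}_1 + \B{\mu}_2)$, so the $\mathcal{P}$-objective dominates the \eqref{eq:mrc}-objective evaluated at $\B{\mu} = \B{\mu}_1 - \B{\mu}_2$ (using $\nu \ge \varphi(\B{\mu})+1$ from the first step). The two inequalities yield equality of optimal values and the claimed identification $\B{\mu}^* = \B{\mu}_1^* - \B{\mu}_2^*$; note that it is precisely $\B{\lambda} \succeq \V{0}$ that makes the split lossless.

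Third, for the dual I would attach multipliers $\B{\alpha} \succeq \V{0}$ to $\V{F}(\B{\mu}_1 - \B{\mu}_2) - \nu\B{1} \preceq \B{b}$ and $\B{\beta}_1, \B{\beta}_2 \succeq \V{0}$ to the sign constraints, and minimize the Lagrangian over $(\B{\mu}_1, \B{\mu}_2, \nu)$. Vanishing of the coefficient of $\B{\mu}_1$ gives $\V{F}^{\up{T}}\B{\alpha} - \B{\beta}_1 = \B{\tau} - \B{\lambda}$ and of $\B{\mu}_2$ gives $\B{\beta}_2 = \B{\tau} + \B{\lambda} - \V{F}^{\up{T}}\B{\alpha}$; eliminating the nonnegative $\B{\beta}_1, \B{\beta}_2$ produces the box constraint $\B{\tau} - \B{\lambda} \preceq \V{F}^{\up{T}}\B{\alpha} \preceq \B{\tau} + \B{\lambda}$, vanishing of the coefficient of $\nu$ gives $\B{1}^{\up{T}}\B{\alpha} = 1$, and the surviving term is $-\B{b}^{\up{T}}\B{\alpha}$, which is exactly $\mathcal{D}$. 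Since $\mathcal{P}$ is a feasible LP (any $\B{\mu}_1, \B{\mu}_2 \succeq \V{0}$ admits a large enough $\nu$) and bounded below (its value equals the finite $\up{R}^*$), LP strong duality closes the argument by equating the optimal values of $\mathcal{P}$, $\mathcal{D}$, and \eqref{eq:mrc}. The epigraph and duality computations are routine; the only genuinely delicate points are the constant bookkeeping in the first step and the role of $\B{\lambda} \succeq \V{0}$ in the second.
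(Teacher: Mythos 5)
Your proposal is correct and follows essentially the same route as the paper's proof: the epigraph variable $\nu = \varphi(\B{\mu}) + 1$ linearizing the maximum via the constraint $\V{F}\B{\mu} - \nu\B{1} \preceq \B{b}$, the positive/negative split $\B{\mu} = \B{\mu}_1 - \B{\mu}_2$ handling the $\ell_1$ term, and standard LP duality for $\mathcal{D}$. You merely spell out two points the paper leaves implicit (the two-sided argument showing the split is lossless because $\B{\lambda} \succeq \V{0}$, and the explicit Lagrangian computation for the dual), which is fine.
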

\begin{proof}
In the convex optimization problem \eqref{eq:mrc}, we introduce the additional variable $\nu \in \mathbb{R}$ given by $\nu = \varphi(\B{\mu}) + 1$. Then, using $\V{g}_{i}$ and $\B{\up{b}}$ as defined above, we have
\begin{align}
\begin{array}{cccc}
\nu & = & \underset{i \in \{1,2, \ldots, |\mathcal{S}|\}}{\max} & \V{g}_{i}^{\up{T}}\B{\mu} -\up{b}_{i}.
\end{array}
\end{align}
Therefore, the optimization problem \eqref{eq:mrc} becomes
\begin{align}\label{eq:mrc_intermediate}
\arraycolsep=1.4pt
\begin{array}{ccc}
\underset{\B{\mu}, \nu}{\min}  & - \B{\tau}^{\up{T}}\boldsymbol{\mu} + \B{\lambda}^{\up{T}}|\B{\mu}| + \nu \\
\text{s.t.} & \B{\up{F}}\B{\mu} - \nu\B{1} \preceq \B{\up{b}}.
\end{array}
\end{align}
\noindent
The optimization in \eqref{eq:mrc_intermediate} can be reformulated as the \ac{LP} in \eqref{eq:mrc_linear} using the change of variables $\B{\mu}_1 = (\B{\mu})_{+}$ and $\B{\mu}_2 = (-\B{\mu})_{+}$ which implies $\B{\mu} = \B{\mu}_1 -\B{ \mu}_2$ and $|\B{\mu}| = \B{\mu}_1 +\B{\mu}_2$. Finally, the dual in \eqref{eq:dual_mrc} is directly obtained from the \ac{LP} \eqref{eq:mrc_linear} (see e.g., Section 4.2 in \cite{DimTsi97}).
\end{proof}
The number of variables of the primal problem $\mathcal{P}$ in \eqref{eq:mrc_linear} (constraints in dual problem $\mathcal{D}$ in \eqref{eq:dual_mrc}) is given by the number of features of $\Phi$, $m=d|\mathcal{Y}|$. Therefore, the complexity of \ac{MRC} learning given by such \ac{LP} formulation is $O(m^3)$ which is not affordable in \mbox{high-dimensional} settings. The next section presents an efficient learning algorithm for \acp{MRC} in high dimensions based on constraint generation methods.

\subsection{Algorithm for efficient learning}
\ac{MRC} learning can be efficiently carried out in \mbox{high-dimensional} settings because the solution of \eqref{eq:mrc_linear} is a sparse vector. This sparsity is due to the implicit \mbox{L1-penalization in \eqref{eq:mrc}, $\B{\lambda}^{\up{T}}|\B{\mu}|$}, and the fact that usually only a small subset of features are informative in \mbox{high-dimensional} settings \citep{GhoDeb:22}. The sparsity of the MRCs' coefficients enables to carry out the learning process considering only a small subset of variables in \eqref{eq:mrc_linear} (respectively constraints in \eqref{eq:dual_mrc}). In the following, we propose an algorithm based on constraint generation methods that solves the \ac{MRC} \ac{LP} problem in \eqref{eq:mrc_linear} by iteratively selecting small subsets of features of $\Phi$.

\begin{algorithm}[H]
\captionsetup{labelfont={bf}}
\caption{Efficient learning algorithm for \mbox{0-1} \acp{MRC}}
\label{alg:efficient_mrc}
\begin{tabular}{ll}
\textbf{Input:} & \hspace{-0.4cm} $\B{\up{F}}$, $\B{\up{b}}$, $\B{\tau}$, $ \B{\lambda}$, initial subset of features $\mathcal{I}$, \\
& \hspace{-0.4cm} dual constraints' violation threshold $\epsilon$, \\
& \hspace{-0.4cm} maximum number of iterations $k_{\up{max}}$, and \\
& \hspace{-0.4cm} maximum number of features selected \\
& \hspace{-0.4cm} in each iteration $n_{\up{max}}$ \\
\textbf{Output:} & \hspace{-0.3cm}selected features \ $\mathcal{I}^{*} \subseteq \{1,2,\ldots,m\}$, \\
& \hspace{-0.3cm}optimal solution $\B{\mu}^{*} \in \mathbb{R}^{m}$, and \\
& \hspace{-0.3cm}optimal worst-case error probability $\up{R}^{*}$ \\
\end{tabular}
\begin{algorithmic}[1]
\setstretch{1.2}
\State $\B{\mu}^1 = [0,0,\ldots,0] \in \mathbb{R}^{m}$
\State LPSOLVE$\big(\mathcal{P}_{\mathcal{I}}\big)$ 
\Statex $\B{\mu}^1_1, \ \B{\mu}^1_2, \ \nu^1 \leftarrow \text{Solution of primal}$
\Statex $\B{\alpha}^1 \leftarrow \text{Solution of dual}$
\Statex $\up{R}^{1} \leftarrow$ Optimal value (worst-case error probability)
\State ${\B{\mu}^{1}}_{\mathcal{I}} = \B{\mu}^{1}_1 - \B{\mu}^{1}_2$
\State $\mathcal{J} \leftarrow$ SELECT$\big(\B{\up{F}}, \ \B{\tau}, \ \B{\lambda}, \ n_{\up{max}}, \ \mathcal{I}, \ \epsilon, \ \B{\alpha}^{1}\big)$ \label{alg:efficient_mrc:line:6}
\State $k=1$
\While{$\mathcal{J} \setminus \mathcal{I} \neq \emptyset$ and $k \leq k_{\up{max}}$}
\State $k = k +1$
\State $\B{\mu}^{k-1}_1=({\B{\mu}^{k-1}}_{\mathcal{J}})_{+}, \ \B{\mu}^{k-1}_2=(-{\B{\mu}^{k-1}}_{\mathcal{J}})_{+}$
\State LPSOLVE$\big({\mathcal{P}}_{\mathcal{J}}, \ \B{\mu}_1^{k-1}, \ \B{\mu}_2^{k-1}, \ \nu^{k-1}\big)$ \label{alg:efficient_mrc:line:11}
\Statex \hspace{0.4cm} $\B{\mu}^{k}_1, \ \B{\mu}^{k}_2, \ \nu^{k} \leftarrow$ Solution of primal
\Statex \hspace{0.4cm} $\B{\alpha}^{k} \leftarrow$ Solution of dual
\Statex \hspace{0.4cm} $\up{R}^{k} \leftarrow$ Optimal value (worst-case error probability)
\State ${\B{\mu}^{k}} = {\B{\mu}^{k-1}}$
\State ${\B{\mu}^{k}}_{\mathcal{J}} = \B{\mu}_1^{k} - \B{\mu}_2^{k}$
\State $\mathcal{I} \leftarrow \mathcal{J}$
\State $\mathcal{J}\leftarrow$ SELECT$\big(\B{\up{F}}, \ \B{\tau}, \ \B{\lambda}, \ n_{\up{max}}, \ \mathcal{I}, \ \epsilon, \ \B{\alpha}^{k}\big)$  \label{alg:efficient_mrc:line:10}
\EndWhile
\State $\mathcal{I}^{*} = \mathcal{I}$, $\B{\mu}^{*} = \B{\mu}^{k}$, $\up{R}^{*} = \up{R}^{k}$
\end{algorithmic}
\end{algorithm}

The proposed learning algorithm obtains the optimal solution $\B{\mu}^{*}$ to the original problem by iteratively solving a sequence of subproblems (see details in Algorithm~\ref{alg:efficient_mrc}). These subproblems correspond with small subsets of features selected by a constraint generation method (\mbox{SELECT} function detailed in Algorithm~\ref{alg:select}).

The subproblem of \eqref{eq:mrc_linear} corresponding to the subset of features $\mathcal{J} \subseteq \{1,2,\ldots,m\}$ is defined as 
\begin{align}
\label{eq:mrc_linear_subprob_primal}
\arraycolsep=1.4pt
\begin{array}{ccc}\mathcal{P}_{\mathcal{J}}: & \underset{\substack{\B{\mu}_1, \B{\mu}_2, \nu}}{\min} & - (\B{\tau}_{\mathcal{J}} - \B{\lambda}_{\mathcal{J}})^{\up{T}}\B{\mu}_1 + (\B{\tau}_{\mathcal{J}} + \B{\lambda}_{\mathcal{J}})^{\up{T}}\B{\mu}_2 + \nu \\
 & \text{s.t.} & \B{\up{F}}_{\mathcal{J}}(\B{\mu}_1 - \B{\mu}_2) + \nu\B{1} \preceq \B{\up{b}} \\
 & & \B{\mu}_1, \B{\mu}_2 \succeq 0 \end{array}
\end{align}
where $\B{\tau}_{\mathcal{J}}$ and $\B{\lambda}_{\mathcal{J}}$ denote the subvectors of $\B{\tau}$ and $\B{\lambda}$ corresponding to the $\mathcal{J}$ components, and $\B{\up{F}}_{\mathcal{J}}$ denotes the submatrix of $\B{\up{F}}$ corresponding to the $\mathcal{J}$ columns. In addition, the dual of \eqref{eq:mrc_linear_subprob_primal} is
\begin{align}
\label{eq:mrc_linear_subprob_dual}
\def\arraystretch{1.2}
\begin{array}{ccc}
\mathcal{D}_{\mathcal{J}}: & \underset{\B{\alpha}}{\max} & -\B{\up{b}}^{\up{T}}\B{\alpha} \\
& \text{s.t.} & \B{\tau}_{\mathcal{J}} - \B{\lambda}_{\mathcal{J}} \ \preceq \ {\B{\up{F}}_{\mathcal{J}}}^{\up{T}}\B{\alpha} \ \preceq  \ \B{\tau}_{\mathcal{J}} + \B{\lambda}_{\mathcal{J}} \\
& & \B{1}^{\up{T}}\B{\alpha} = 1, \B{\alpha} \succeq 0.
\end{array}
\end{align}
\noindent
At each iteration $k$ of Algorithm~\ref{alg:efficient_mrc} (Line~\ref{alg:efficient_mrc:line:11}), the LPSOLVE function solves such subproblems and obtains the primal solution $\B{\mu}^{k}_1, \B{\mu}^{k}_2, \nu^{k}$ and dual solution $\B{\alpha}^{k}$ along with the worst-case error probability $\up{R}^{k}$ given by the optimal value. The dual solution is used by the \mbox{SELECT} function (Line~\ref{alg:efficient_mrc:line:6} and \ref{alg:efficient_mrc:line:10}) to obtain a subsequent subset of features. The primal solution can be used by the LPSOLVE function (Line~\ref{alg:efficient_mrc:line:11}) to warm-start the optimization in the next iteration. This iterative process ends when the \mbox{SELECT} function returns a subset of features that does not contain any new feature.

Notice that at each iteration $k$, the algorithm obtains coefficients $\B{\mu}^k$, features subset $\mathcal{J}$, and worst-case error probability $\up{R}^k$. In Section~\ref{theoretical_results}, we show that such coefficients $\B{\mu}^k$ provide an \ac{MRC} with worst-case error probability $\up{R}^k$ corresponding with the uncertainty set given by features $\mathcal{J}$. In addition, we show that such worst-case error probability monotonically decreases with the number of iterations and converges to the optimal corresponding with all the features.

\subsection{Greedy feature selection}
\label{select}
The \mbox{SELECT} function determines the subset of features for the next iteration based on the previous subset and the current solution. Specifically, this greedy selection process selects features corresponding to the most violated constraints in the dual and removes features corresponding to the constraints satisfied with a positive slack (overlysatisfied constraints). Such a process aims to achieve the fastest decrease in worst-case error probability while using the smallest number of features. As described in the literature for constraint generation methods \citep{DimTsi97, DesJac}, this type of selection process can be implemented in several alternative ways such as adding any subset of violated constraints or not removing constraints. In practice, we observe that adding a subset of violated constraints and removing all the overlysatisfied constraints achieves the fastest convergence.

 \begin{algorithm}
\captionsetup{labelfont={bf}}
\caption{SELECT (greedy feature selection)}
\label{alg:select}
\begin{tabular}{ll}
\textbf{Input:} & \hspace{-0.4cm} $\B{\up{F}}$, $\B{\tau}$, $\B{\lambda}$, $n_{\up{max}}$, current subset of features $\mathcal{I}$, \\
& \hspace{-0.4cm} dual constraints' violation threshold $\epsilon$, and \\
& \hspace{-0.4cm} dual solution $\B{\alpha}$ \\
\textbf{Output:} & \hspace{-0.3cm}selected features \ $\mathcal{J} \subseteq \{1,\ldots,m\}$ \\
\end{tabular}
\begin{algorithmic}[1]
\setstretch{1.2}
\State $\B{\up{v}} = |\B{\up{F}}^{\up{T}}\B{\alpha}-\B{\tau}|-\B{\lambda}$ 
\State $\mathcal{J} = \{i: i \in \mathcal{I}, \ \up{v}_i = 0\}$ \label{alg:select:line:4}
\State $\mathcal{A} \leftarrow$ \text{ARGNMAX}$(\B{\up{v}}, n_{\up{max}})$ \label{alg:select:line:5}
\State $\set{A} = \{i: i \in \set{A}, \ \up{v}_i > \epsilon\}$
\State $\set{J} \leftarrow \set{J} \cup \set{A}$ \label{alg:select:line:6}
\end{algorithmic}
\end{algorithm}

The implementation details for function \mbox{SELECT} are provided in Algorithm~\ref{alg:select}. The vector $\B{\up{v}}$ quantifies the violations in the dual constraints. In particular, the features corresponding to the negative values of vector $\B{\up{v}}$ (oversatisfied constraints) are removed, and the features corresponding to the $n_{\up{max}}$ largest positive values of vector $\B{\up{v}}$ are added. Moreover, the selection of features is also restricted by \mbox{hyperparameter} $\epsilon \geq 0$ that represents the minimum violation for dual constraints.

\subsection{Computational complexity}
\label{computation_complexity}
The computational complexity of Algorithm~\ref{alg:efficient_mrc} is given by the number of iterations and the complexity per iteration that depend on the maximum number of features selected in each iteration $n_{\up{max}}$ and the hyperparameter $\epsilon$. Decreasing $n_{\up{max}}$ results in a reduced complexity per iteration at the expense of an increased number of iterations. Increasing $\epsilon$ can decrease the number of iterations and complexity per iteration at the expense of achieving approximate solutions. For instance, if $n_{\up{max}}=m$ and $\epsilon=0$ the algorithm finds the exact solution in only one iteration but the complexity of such iteration is large. In Section 4, we show that in practice $\epsilon=0.0001$ and $n_{\text{max}} \in [50,500]$ obtains efficient and accurate learning of \acp{MRC}.

The number of iterations can be further reduced by using an adequate choice for the initial subset of features. This initial subset of features $\mathcal{I}$ can be obtained efficiently using a simple approach for coarse feature selection such as correlation screening \citep{tibshirani2012strong} or few iterations of a first order optimization method \citep{MazRomGrun:23}. Similarly, the complexity per iteration is reduced by using a warm-start in LPSOLVE. In the proposed algorithm, a warm-start for iteration $k$ is obtained directly from the previous solution $\B{\mu}^{k-1}$ of the \ac{LP} at iteration $k-1$. Note that this warm-start is a basic feasible solution for the \ac{LP} at iteration $k$ since it is obtained by removing and adding features for which the corresponding coefficients at $k-1$ are zero.

\subsection{Theoretical analysis of the algorithm}
\label{theoretical_results}
The following shows the theoretical properties of the proposed learning algorithm. In particular, the algorithm's iterations provide a sequence of \acp{MRC} with decreasing worst-case error probabilities that converges to the \ac{MRC} corresponding with all the features.

The following theorem shows that the algorithm provides a decreasing sequence of worst-case error probabilities that can provide an upper bound to the classification error of the corresponding \acp{MRC}.
\begin{theorem} \label{th:mono_dec}
Let $\B{\mu}^k$ and $\up{R}^k$ be the coefficients and worst-case error probability obtained by the proposed algorithm at iteration k. If $\up{h}^k$ is the \ac{MRC} given by $\B{\mu}^k$, then we have that 
\begin{equation}
\label{ineq:bound}
\mathcal{R}(\up{h}^{k}) \leq \up{R}^{k} + (|\mathbb{E}_{\up{p}^{*}}\{\Phi(x,y)_{\set{J}}\} - \B{\tau}_{\set{J}}| - \B{\lambda}_{\set{J}})^{\up{T}}|\B{\mu}_{\set{J}}^{k}|
\end{equation}
where $\mathcal{J}$ is the subset of features used at iteration $k$. In addition, the algorithm provides \acp{MRC} with decreasing worst-case error probabilities, that is,
\begin{equation}\label{ineq:mono_dec}
\up{R}^{k+1} \leq \up{R}^{k}, \ k=1, 2,\ldots
\end{equation}
\end{theorem}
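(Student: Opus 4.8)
The plan is to prove the two displayed inequalities separately, since they rest on different ingredients: \eqref{ineq:bound} follows from a deterministic pointwise bound on the \ac{MRC} rule combined with the LP optimal-value identity, whereas \eqref{ineq:mono_dec} follows from LP duality applied to the feature subsets produced by \mbox{SELECT}.

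For \eqref{ineq:bound}, the key lemma I would establish (or cite from the \ac{MRC} literature) is the pointwise inequality
\begin{equation*}
\up{h}^{k}(y|x) \ \geq \ \Phi(x,y)^{\up{T}}\B{\mu}^{k} - \varphi(\B{\mu}^{k}) \qquad \text{for all } x \in \set{X},\ y \in \set{Y}.
\end{equation*}
This follows from the closed form \eqref{eq:prob}: when $d_{x} \neq 0$ one uses $(\,\cdot\,)_{+} \geq (\,\cdot\,)$ together with $d_{x} \leq 1$, the latter obtained by evaluating $\varphi$ in \eqref{eq:varphi} at the subset $\set{C}$ of labels with $\Phi(x,y)^{\up{T}}\B{\mu}^{k} > \varphi(\B{\mu}^{k})$; when $d_{x} = 0$ each term $\Phi(x,y)^{\up{T}}\B{\mu}^{k} - \varphi(\B{\mu}^{k})$ is nonpositive while $\up{h}^{k}(y|x) = 1/|\set{Y}| > 0$. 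Taking expectations under $\up{p}^{*}$ and using $\mathcal{R}(\up{h}^{k}) = 1 - \mathbb{E}_{\up{p}^{*}}\{\up{h}^{k}(y|x)\}$ yields $\mathcal{R}(\up{h}^{k}) \leq 1 - \mathbb{E}_{\up{p}^{*}}\{\Phi(x,y)\}^{\up{T}}\B{\mu}^{k} + \varphi(\B{\mu}^{k})$. Since $\B{\mu}^{k}$ is supported on $\set{J}$, I would substitute the optimal-value identity $\up{R}^{k} = 1 - \B{\tau}_{\set{J}}^{\up{T}}\B{\mu}_{\set{J}}^{k} + \varphi(\B{\mu}^{k}) + \B{\lambda}_{\set{J}}^{\up{T}}|\B{\mu}_{\set{J}}^{k}|$ (valid because $\mathcal{P}_{\set{J}}$ is equivalent by Theorem~\ref{th:linear_form} to \eqref{eq:mrc} restricted to $\set{J}$) to eliminate $\varphi(\B{\mu}^{k})$, and finally bound the cross term via the componentwise estimate $\V{a}^{\up{T}}\V{b} \leq |\V{a}|^{\up{T}}|\V{b}|$. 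This produces exactly the correction term in \eqref{ineq:bound}.

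For \eqref{ineq:mono_dec}, I would argue through the three feature subsets at iteration $k$: the set $\set{J}$ used to obtain $\up{R}^{k}$, the kept features $\set{K} = \{i \in \set{J} : \up{v}_{i} = 0\}$, and the next set $\set{J}^{k+1} = \set{K} \cup \set{A} \supseteq \set{K}$ returned by \mbox{SELECT}. First I would note that dual feasibility of $\B{\alpha}^{k}$ for $\mathcal{D}_{\set{J}}$ forces $\up{v}_{i} \leq 0$ for every $i \in \set{J}$, so $\set{J} = \set{K} \cup \{i : \up{v}_{i} < 0\}$ with no positive-violation feature inside $\set{J}$. The removed features, those with $\up{v}_{i} < 0$, have both dual constraints strictly slack; by complementary slackness for the optimal primal--dual pair $(\B{\mu}^{k},\B{\alpha}^{k})$ of $\mathcal{P}_{\set{J}}/\mathcal{D}_{\set{J}}$, they satisfy $(\B{\mu}_{1}^{k})_{i} = (\B{\mu}_{2}^{k})_{i} = 0$, hence $\B{\mu}_{i}^{k} = 0$. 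Therefore $\B{\mu}^{k}$ restricted to $\set{K}$ stays feasible for $\mathcal{P}_{\set{K}}$ with unchanged objective, so the optimal value of $\mathcal{P}_{\set{K}}$ equals $\up{R}^{k}$. Since $\set{K} \subseteq \set{J}^{k+1}$ and enlarging the feature set only appends nonnegative variables $(\B{\mu}_{1})_{i},(\B{\mu}_{2})_{i}$ to the minimization (whose removal by zeroing recovers $\mathcal{P}_{\set{K}}$), the optimal value can only decrease, giving $\up{R}^{k+1} \leq \up{R}^{k}$.

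I expect the main obstacle to be the value-preservation step for \eqref{ineq:mono_dec}: it is not enough that the removed features can be dropped while retaining feasibility; one must invoke complementary slackness to guarantee their coefficients vanish at the optimum, so that removal leaves $\up{R}^{k}$ \emph{exactly} unchanged rather than merely bounded. Making this rigorous requires the preliminary observation that $\up{v}_{i} \leq 0$ on all of $\set{J}$, which I would state explicitly before applying complementary slackness. The pointwise lemma underlying \eqref{ineq:bound} is more routine, but it hinges on the $d_{x} \leq 1$ estimate, which I would verify carefully.
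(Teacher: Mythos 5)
Your proof is correct and follows essentially the same route as the paper's, so the verdict is agreement, but you are more self-contained in two places where the paper relies on citation or bare assertion. For \eqref{ineq:bound}, the paper simply identifies $\B{\mu}^{k}$, $\up{R}^{k}$ as the \ac{MRC} solution for the uncertainty set determined by the features $\set{J}$ and invokes the error-probability bounds of \cite{MazRomGrun:23}; your pointwise lemma $\up{h}^{k}(y|x) \geq \Phi(x,y)^{\up{T}}\B{\mu}^{k} - \varphi(\B{\mu}^{k})$ (via the $d_{x}\leq 1$ estimate, which you verify correctly), followed by taking expectations under $\up{p}^{*}$, substituting the optimal-value identity for $\mathcal{P}_{\set{J}}$, and applying $\V{a}^{\up{T}}\V{b}\leq|\V{a}|^{\up{T}}|\V{b}|$, is precisely the derivation underlying that cited bound, so nothing is lost. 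For \eqref{ineq:mono_dec}, the paper argues exactly as you do that the warm start at iteration $k+1$ is feasible with objective value $\up{R}^{k}$, but it merely asserts that the removed features ``correspond to zero coefficients''; your observation that dual feasibility forces $\up{v}_{i}\leq 0$ on $\set{J}$, so the removed indices have strictly slack dual constraints and hence $(\B{\mu}_{1}^{k})_{i}=(\B{\mu}_{2}^{k})_{i}=0$ by complementary slackness, supplies the justification the paper leaves implicit. The only point worth adding is that for linear programs complementary slackness holds for \emph{any} pair of optimal primal and dual solutions, so the argument applies to whichever pair LPSOLVE happens to return.
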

\begin{proof}
At each intermediate step, $\B{\mu}^k$ and $\up{R}^k$ are the coefficients and worst-case error probability of the MRC corresponding with uncertainty set
\begin{align}
\mathcal{U}^k=\{\up{p} \in \Delta (\mathcal{X} \times \mathcal{Y}) : |\mathbb{E}_{\up{p}^{*}}\{\Phi(x,y)_{\set{J}}\}-\B{\tau}_{\set{J}}| \preceq \B{\lambda}_{\set{J}}\}
\end{align}
using Theorem~\ref{th:linear_form} for the subproblem \eqref{eq:mrc_linear_subprob_primal}. Therefore, inequality~\eqref{ineq:bound} is obtained using the bounds for error probabilities for \acp{MRC} in \cite{MazRomGrun:23}. The second result is obtained since the warm-start at iteration $k+1$ is a feasible solution with an objective value that equals the worst-case error probability $\up{R}^{k}$ at iteration $k$. The \mbox{warm-start} is feasible at $k+1$ and has value $\up{R}^{k}$ because it is obtained from the solution at iteration $k$ by removing and adding features corresponding to zero coefficients.
\end{proof}
Inequality \eqref{ineq:bound} bounds the classification error of the \ac{MRC} classifier at any intermediate iteration. Specifically, the worst-case error probability $\up{R}^{k}$ at any iteration $k$ corresponding to a feature subset $\mathcal{J}$ directly provides such a bound if the error in the mean vector estimate is not underestimated, i.e., $\B{\lambda}_{\mathcal{J}} \succeq |\mathbb{E}_{\up{p}^{*}}\{\Phi(x,y)_{\set{J}}\} - \B{\tau}_{\mathcal{J}}|$. In particular, if $\B{\lambda}_{\set{J}}$ is a confidence vector with coverage probability $1-\delta$, that is, $\mathbb{P}\{|\mathbb{E}_{\up{p}^{*}}\{\Phi(x,y)_{\set{J}}\} - \B{\tau}_{\set{J}}| \preceq \B{\lambda}_{\set{J}}\} \geq 1-\delta$, then
\begin{equation}
\label{ineq:subproblem_bound}
\mathcal{R}(\up{h}^{k}) \leq \up{R}^{k}
\end{equation}
with probability at least $1-\delta$. In other cases, $\up{R}^{k}$ still provides approximate bounds as long as the underestimation $|\mathbb{E}_{\up{p}^{*}}\{\Phi(x,y)_{\set{J}}\} - \B{\tau}_{\mathcal{J}}| - \B{\lambda}_{\mathcal{J}}$ is not substantial. Similar bounds also hold for the sequence of deterministic classification rules due to inequality in \eqref{eq:cl_risk}.

The next theorem shows the convergence of the sequence of \acp{MRC} obtained by the proposed algorithm to the \ac{MRC} corresponding with all the features.\begin{theorem}
\label{th:convergence}
Let $\B{\mu}^*$ and $\up{R}^{*}$ be the \ac{MRC} coefficients and worst-case error probability obtained by solving \eqref{eq:mrc_linear} using all the features. If $\up{R}^{k}$, $k=1, 2, \ldots$, is the sequence of worst-case error probabilities obtained by the proposed algorithm using dual constraints' violation threshold $\epsilon$. Then, there exists $k_0\geq1$ such that 
\begin{equation}\label{ineq:convergence}
\up{R}^{*} \leq \up{R}^{k} \leq \up{R}^{*} + \epsilon\|\B{\mu}^{*}\|_1
\end{equation}
for any $k \geq k_0$.
\begin{proof}
The first inequality follows by noting that the feasible set of \eqref{eq:dual_mrc} corresponding with all the features is contained in the feasible set of any subproblem given a subset of features. In the following, we prove the second inequality. 

Let $k_0 \geq 1$ be an iteration in which \mbox{SELECT} function in Algorithm~\ref{alg:efficient_mrc} does not add any new feature. Such a case occurs after a finite number of iterations due to the properties of constraint generation methods \citep{DimTsi97}. If $\B{\alpha}^{k_0}$ is the dual solution obtained at iteration $k_0$ with subset of features $\set{J}$, then we have that 
\begin{align}\label{ineq:final_violation_1}
\begin{array}{c}
\B{\tau}_{\mathcal{J}} - \B{\lambda}_{\mathcal{J}} \ \preceq \ \B{\up{F}}_{\mathcal{J}}\B{\alpha}^{k_0} \ \preceq  \ \B{\tau}_{\mathcal{J}} + \B{\lambda}_{\mathcal{J}} 
\end{array}
\end{align}
\begin{align}\label{ineq:final_violation_2}
\B{\tau}_{{\mathcal{J}}^{\up{c}}} - \B{\lambda}_{{\mathcal{J}}^{\up{c}}} - \epsilon\B{1} \ \preceq \ \B{\up{F}}_{{\mathcal{J}}^{\up{c}}}\B{\alpha}^{k_0} \ \preceq  \ \B{\tau}_{{\mathcal{J}}^{\up{c}}} + \B{\lambda}_{{\mathcal{J}}^{\up{c}}} + \epsilon\B{1}.
\end{align}
On combining \eqref{ineq:final_violation_1} and \eqref{ineq:final_violation_2}, we have
\begin{align}\label{ineq:final_violation}
\begin{array}{c}
\B{\tau} - \B{\lambda} - \epsilon\B{1} \ \preceq \ \B{\up{F}}\B{\alpha}^{k_0} \ \preceq  \ \B{\tau} + \B{\lambda} + \epsilon\B{1}.
\end{array}
\end{align}
Now, consider the following primal problem corresponding to the dual \eqref{eq:dual_mrc} with constraints as in \eqref{ineq:final_violation}.
\begin{align}
\arraycolsep=1.4pt
\label{eq:mrc_linear_updated}
\begin{array}{cc}
\underset{\substack{\B{\mu}_1, \B{\mu}_2, \nu}}{\min} & - (\B{\tau} - \B{\lambda} - \epsilon\B{1})^{\up{T}}\B{\mu}_1 + (\B{\tau} + \B{\lambda} + \epsilon\B{1})^{\up{T}}\B{\mu}_2 + \nu \\ 
 \text{s.t.} & \B{\up{F}}^{\up{T}}(\B{\mu}_1 - \B{\mu}_2) - \nu\B{1} \preceq \B{\up{c}} \\
& \B{\mu}_1, \B{\mu}_2 \succeq 0.
\end{array}
\end{align}
If $\B{\mu}^{*}_1$, $\B{\mu}^{*}_2$, and $\nu^{*}$ correspond to the \ac{MRC} coefficients obtained by solving \eqref{eq:mrc_linear} using all the features, then it is also a feasible solution to the problem \eqref{eq:mrc_linear_updated} since they have the same feasible set. Therefore, using weak duality (Theorem 4.3 in \cite{DimTsi97}), we have that
\begin{align}
-\B{\up{b}}^{\up{T}}\B{\alpha}^{k_0} \leq - (\B{\tau} - \B{\lambda} - \epsilon\B{1})^{\up{T}}\B{\mu}^{*}_1 + (\B{\tau} + \B{\lambda} + \epsilon\B{1})^{\up{T}}\B{\mu}^{*}_2 + \nu^{*}
\end{align}
since $\B{\alpha}^{k_0}$ is a feasible solution for the dual of \eqref{eq:mrc_linear_updated}. Therefore,
\begin{align}
\up{R}^{k_0} \leq \up{R}^{*} + \epsilon\B{1}(\B{\mu}^{*}_1 + \B{\mu}^{*}_2) = \up{R}^{*} + \epsilon\|\B{\mu}^{*}\|_{1}
\end{align}
since $\B{\mu}^{*}_1 = (\B{\mu}^{*})_{+}$ and $\B{\mu}^{*}_2 = (-\B{\mu}^{*})_{+}$. Hence, the second inequality holds for any $k \geq k_0$ since $\up{R}^{k} \leq \up{R}^{k_0}$ due to the monotonic decrease of the worst-case error probability shown in Theorem~\ref{th:mono_dec}.
\end{proof}
\end{theorem}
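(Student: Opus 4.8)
The plan is to prove the two bounds in \eqref{ineq:convergence} separately, since they rest on different facts: the left inequality $\up{R}^{*}\leq\up{R}^{k}$ holds for every $k$ and follows from a nesting of feasible sets, whereas the right inequality is the genuine convergence statement and requires identifying a terminal iteration together with a weak-duality argument.

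For the lower bound I would work at the level of the dual problems. For any feature subset $\set{J}$, the dual $\set{D}_{\set{J}}$ in \eqref{eq:mrc_linear_subprob_dual} has box constraints $\B{\tau}_{\set{J}}-\B{\lambda}_{\set{J}}\preceq\B{\up{F}}_{\set{J}}^{\up{T}}\B{\alpha}\preceq\B{\tau}_{\set{J}}+\B{\lambda}_{\set{J}}$ that form only a subcollection of the constraints in the full dual \eqref{eq:dual_mrc}, while the normalization $\B{1}^{\up{T}}\B{\alpha}=1$ and $\B{\alpha}\succeq0$ are shared. Hence the feasible region of \eqref{eq:dual_mrc} is contained in that of $\set{D}_{\set{J}}$, and since both maximize the same objective $-\B{\up{b}}^{\up{T}}\B{\alpha}$, the subproblem optimum can only be larger; by LP strong duality it equals $\up{R}^{k}$, giving $\up{R}^{*}\leq\up{R}^{k}$ for all $k$.

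For the upper bound I would first invoke the finiteness of the constraint-generation procedure to fix an iteration $k_0$ at which SELECT returns no new feature; let $\set{J}$ be the active set there and $\B{\alpha}^{k_0}$ the associated dual solution, so that $\up{R}^{k_0}=-\B{\up{b}}^{\up{T}}\B{\alpha}^{k_0}$. Feasibility of $\B{\alpha}^{k_0}$ for $\set{D}_{\set{J}}$ supplies the tight box bounds $\B{\tau}_{\set{J}}-\B{\lambda}_{\set{J}}\preceq\B{\up{F}}_{\set{J}}^{\up{T}}\B{\alpha}^{k_0}\preceq\B{\tau}_{\set{J}}+\B{\lambda}_{\set{J}}$ on the $\set{J}$-coordinates. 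The additional ingredient is the stopping rule: since no index of $\set{J}^{\up{c}}$ is selected, every such coordinate of the violation vector $\B{\up{v}}=|\B{\up{F}}^{\up{T}}\B{\alpha}^{k_0}-\B{\tau}|-\B{\lambda}$ is at most $\epsilon$, which is precisely an $\epsilon$-relaxed box bound on the $\set{J}^{\up{c}}$-coordinates. Concatenating the two families of bounds shows that $\B{\alpha}^{k_0}$ satisfies the global relaxed constraints $\B{\tau}-\B{\lambda}-\epsilon\B{1}\preceq\B{\up{F}}^{\up{T}}\B{\alpha}^{k_0}\preceq\B{\tau}+\B{\lambda}+\epsilon\B{1}$; equivalently it is dual-feasible for the LP obtained from \eqref{eq:mrc_linear} by inflating the confidence vector $\B{\lambda}$ to $\B{\lambda}+\epsilon\B{1}$. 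Because this perturbation alters only the objective and leaves the constraint matrix untouched, the full-feature optimizer $\B{\mu}_1^{*},\B{\mu}_2^{*},\nu^{*}$ stays primal-feasible for the perturbed LP. Weak duality then bounds $\up{R}^{k_0}=-\B{\up{b}}^{\up{T}}\B{\alpha}^{k_0}$ above by the perturbed primal objective at that optimizer; expanding the extra $\epsilon\B{1}^{\up{T}}(\B{\mu}_1^{*}+\B{\mu}_2^{*})$ term and using $\B{\mu}_1^{*}+\B{\mu}_2^{*}=|\B{\mu}^{*}|$ collapses the right-hand side to $\up{R}^{*}+\epsilon\|\B{\mu}^{*}\|_1$. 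Finally, the monotonic decrease from Theorem~\ref{th:mono_dec} propagates $\up{R}^{k}\leq\up{R}^{k_0}$ to all $k\geq k_0$.

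I expect the main obstacle to be twofold. First, I must justify the existence of the terminal iteration $k_0$, i.e. that the loop halts; here I would rely on there being only finitely many feature subsets together with the standard convergence theory of constraint-generation methods. Second, the weak-duality step must be arranged so that the \emph{unperturbed} optimizer serves as a feasible point for the \emph{perturbed} LP, which hinges on recognizing that inflating $\B{\lambda}$ to $\B{\lambda}+\epsilon\B{1}$ modifies only the objective. The subsequent algebra reducing the bound to $\epsilon\|\B{\mu}^{*}\|_1$ is routine once the $\epsilon$-slack of the stopping rule has been matched to this objective perturbation.
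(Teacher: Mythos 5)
Your proposal is correct and follows essentially the same route as the paper's proof: the lower bound via containment of the full dual's feasible set in each subproblem's, and the upper bound via the $\epsilon$-relaxed dual feasibility of $\B{\alpha}^{k_0}$ at the terminal iteration combined with weak duality against the perturbed primal (in which the unperturbed optimizer remains feasible), followed by monotonicity from Theorem~\ref{th:mono_dec}. Your explicit remark that inflating $\B{\lambda}$ to $\B{\lambda}+\epsilon\B{1}$ changes only the objective is exactly the observation the paper uses when asserting the two primal problems share a feasible set.
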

Inequality~\eqref{ineq:convergence} shows that if $\epsilon=0$ the algorithm finds the \ac{MRC} corresponding to all the features. In other cases, it finds an approximate solution whose accuracy depends on the hyperparameter $\epsilon$. Therefore, the hyperparameter $\epsilon$ can serve to reduce the complexity of the algorithm while obtaining near optimal solutions. In the next section, we further analyze the effect of hyperparameter $\epsilon$ through numerical experiments using multiple real datasets.
\begin{table}
 \captionsetup{labelfont={it}, labelsep=period, font=small}
                         \caption{High-dimensional data sets.}
    \vskip -0.15in
     \label{tb:datasets}
\setstretch{1.2}
\begin{center}
\scalebox{0.85}{\begin{tabular}{|c|c|c|c|c|}
\hline
Number & Data set & \multicolumn{1}{c|}{Variables} & \multicolumn{1}{c|}{Samples} & \multicolumn{1}{c|}{Classes} \\ \hline
1 & Arcene & 10000  & 200 & 2 \\
2 & Colon & 2000 & 62 & 2  \\
3 & CLL\textunderscore SUB\textunderscore 111 & 11340 & 111 & 3 \\
4 & Dorothea & 100000 & 1150 & 2\\
5 & GLI\textunderscore 85 & 22283 & 85 & 2\\
6 & GLIOMA & 4434 & 50 & 4 \\
7 & Leukemia & 7129 & 72 & 2 \\
8 & Lung & 12600 & 203 & 5\\
9 & MLL & 12582 & 72 & 3\\
10 & Ovarian & 15154 & 253 & 2 \\
11 & Prostate\textunderscore GE & 5966 & 102 & 2\\
12 & SMK\textunderscore CAN\textunderscore 187 & 19993 & 187 & 2\\
13 & TOX\textunderscore 171 & 5748 & 171 & 4 \\
 \hline
\end{tabular}}
\end{center}
   \vskip -0.25in
\end{table}

\begin{figure*}
 \centering
      \begin{subfigure}[b]{0.49\textwidth}
         \centering
           \psfrag{k}[c][t][0.9]{Number of iterations $k$}
         \psfrag{R1}[c][t][0.9]{$\up{R}^{k} - \up{R}^{*}$}
         \psfrag{a}[][l][0.7]{$10^{0}$ \ \ \ }
         \psfrag{b}[][l][0.7]{$10^{-3}$ \ \ \ }
         \psfrag{c}[][l][0.7]{$10^{-6}$ \ \ \ }
         \psfrag{d}[][l][0.7]{$10^{-9}$ \ \ \ } 
          \psfrag{0}[][][0.7]{0}
           \psfrag{10}[][][0.7]{10}
            \psfrag{20}[][][0.7]{20}
             \psfrag{5}[][][0.7]{}
              \psfrag{15}[][][0.7]{}
               \psfrag{25}[][][0.7]{}
         \psfrag{E = 0.01}[l][l][0.8]{$\epsilon = 0.01$}
         \psfrag{E = 0.001}[l][l][0.8]{$\epsilon = 0.001$}
         \psfrag{E = 0.000000000001}[l][l][0.8]{$\epsilon = 0.0001$}
         \includegraphics[width=\textwidth]{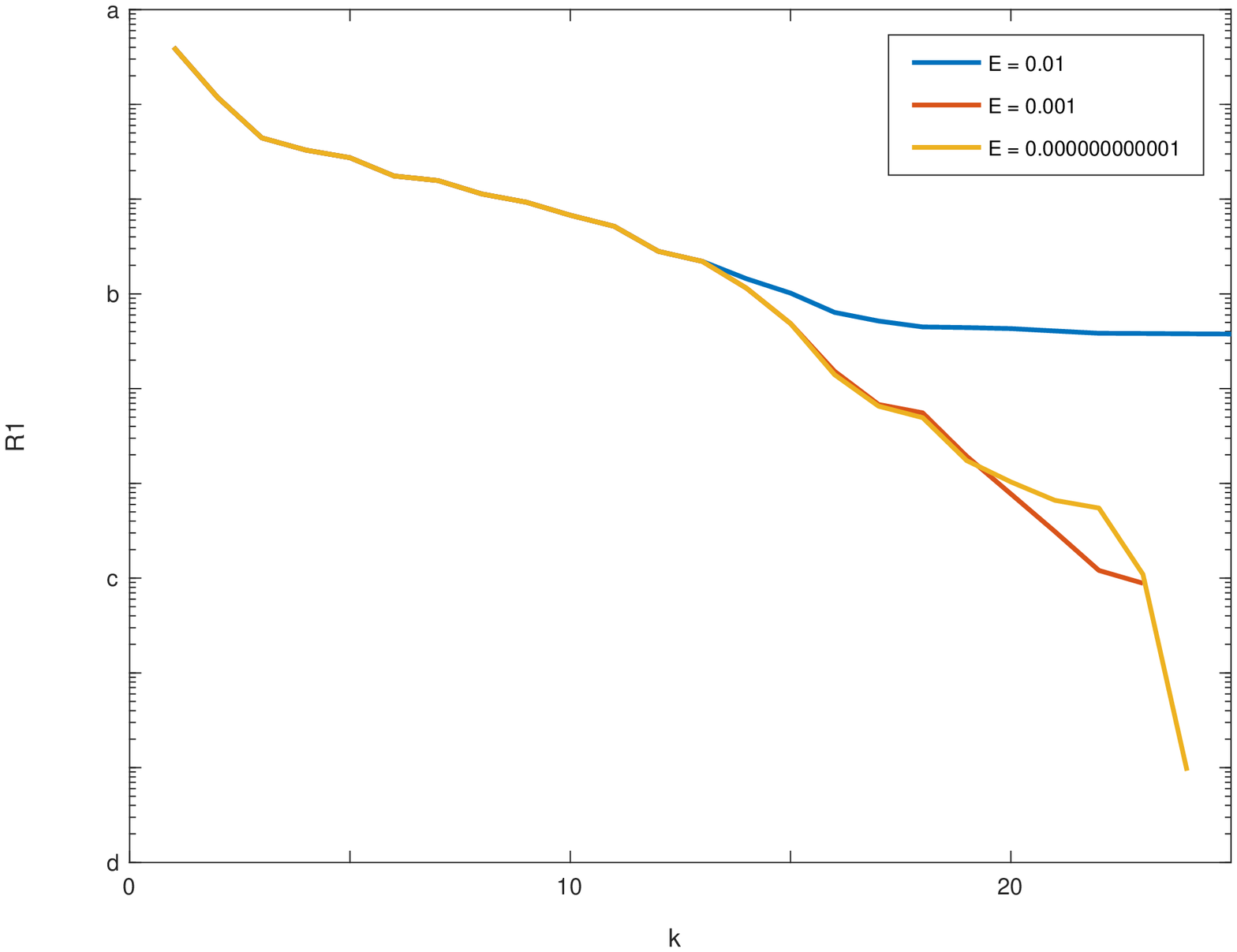}
         \captionsetup{font=small}
         \caption{Ovarian}
         \label{ovarian_decreasing_bounds}
     \end{subfigure}
      \begin{subfigure}[b]{0.49\textwidth}
         \centering
         \psfrag{k}[c][t][0.9]{Number of iterations $k$}
         \psfrag{R1}[c][t][0.9]{$\up{R}^{k} - \up{R}^{*}$}
         \psfrag{a}[][l][0.7]{$10^{0}$ \ \ \ }
         \psfrag{b}[][l][0.7]{$10^{-4}$ \ \ \ }
         \psfrag{c}[][l][0.7]{$10^{-8}$ \ \ \ }
        \psfrag{E = 0.01}[l][l][0.8]{$\epsilon = 0.01$}
         \psfrag{E = 0.001}[l][l][0.8]{$\epsilon = 0.001$}
         \psfrag{E = 0.000000000001}[l][l][0.8]{$\epsilon = 0.0001$}
         \psfrag{0}[][][0.7]{0}
          \psfrag{10}[][][0.7]{10}
           \psfrag{20}[][][0.7]{20}
           \psfrag{40}[][][0.7]{40}
         \includegraphics[width=\textwidth]{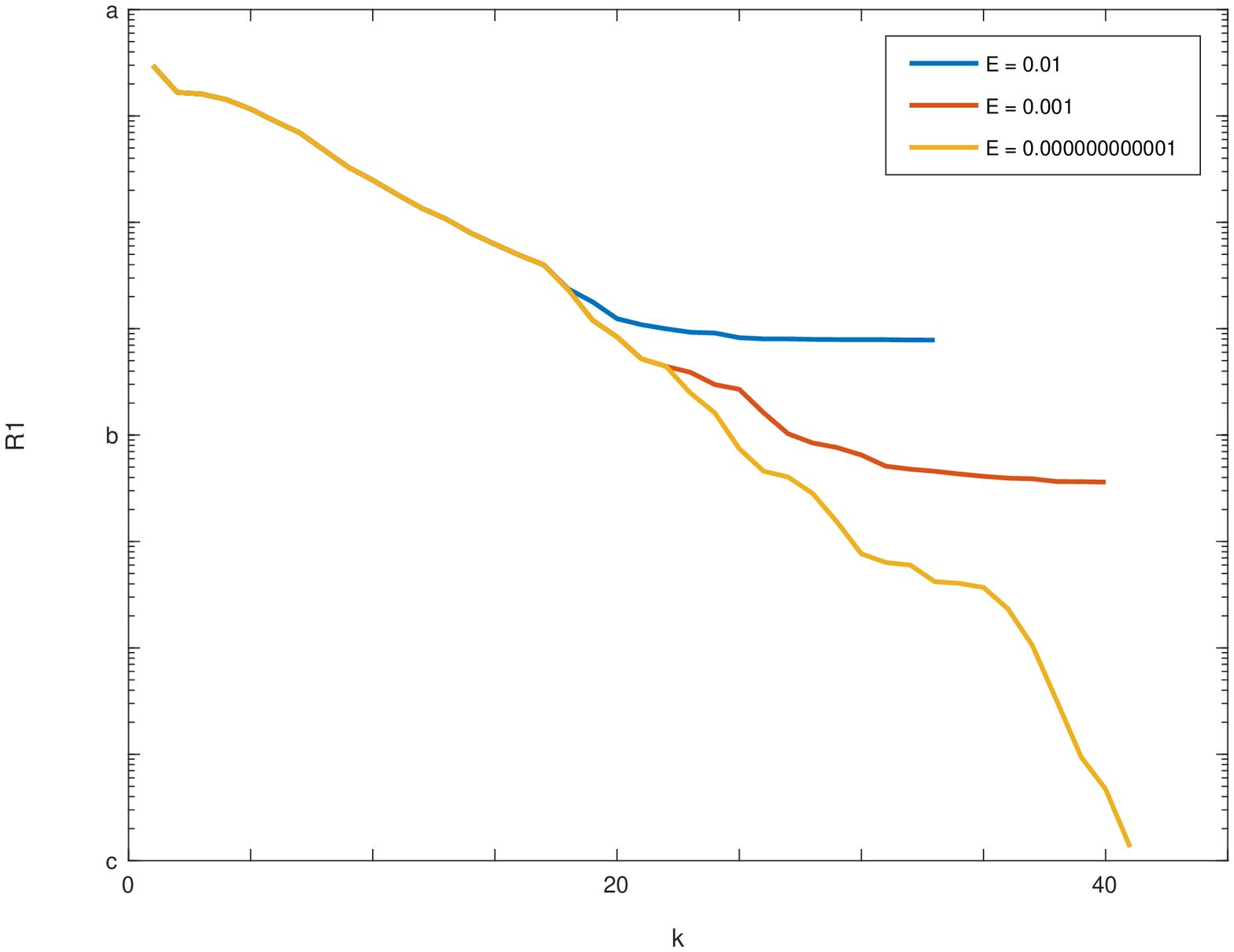}
         \captionsetup{font=small}
         \caption{Arcene}
         \label{arcene_decreasing_bounds}
     \end{subfigure} 
       \vskip -0.1in
    \captionsetup{labelfont={it}, labelsep=period, font=small}
    \caption{Monotonic decrease of the worst-case error probability $\up{R}^{k}$ increasing {\color{black}the} number of iterations $k$.}
    \label{fig:decreasing_upper_bound}
\end{figure*}

\begin{figure}
\centering
 \psfrag{arcene}[l][l][0.8]{Arcene}
 \psfrag{12345678912345678912345678}[l][l][0.8]{Prostate\textunderscore GE}
 \psfrag{smkcan}[l][l][0.8]{SMK\textunderscore CAN\textunderscore 187}
 \psfrag{ovarian}[l][l][0.8]{Ovarian}
 \psfrag{0.3}[][][0.7]{0.3}
 \psfrag{0.4}[][][0.7]{0.4}
 \psfrag{0.2}[][][0.7]{0.2}
 \psfrag{0.1}[][][0.7]{0.1}
  \psfrag{0}[][][0.7]{0}
  \psfrag{a}[][][0.7]{$10^{0}$}
  \psfrag{b}[][][0.7]{$10^{1}$}
  \psfrag{c}[][][0.7]{$10^{2}$}
    \psfrag{d}[][][0.8]{$10^{3}$}
 \psfrag{nmax}[c][t][0.9]{Maximum number of features selected $n_{\up{max}}$}
 \psfrag{Reltime}[][][0.9]{Relative time}
\includegraphics[width=0.49\textwidth]{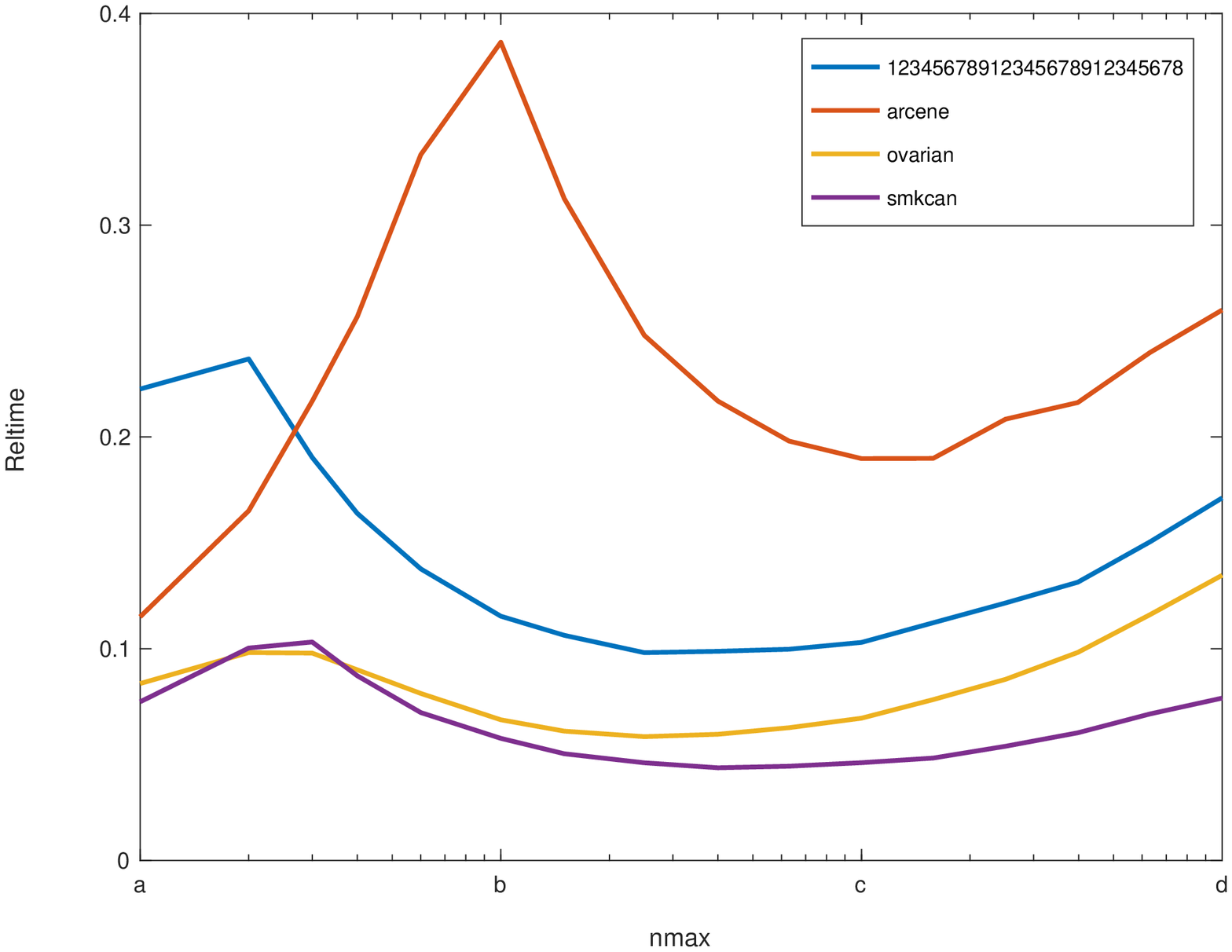}
\captionsetup{font=small}
\caption{Improved efficiency of \mbox{MRC-CG} over \mbox{MRC-LP} for different values of maximum number of features selected $n_{\up{max}}$}
\label{fig:efficiency_based_on_feature_selection}
\end{figure}

\section{Results} \label{results}
In this section, we present a set of experimental results that analyze the effect of hyperparameters and quantify the training time improvements in practice. In addition, we describe the benefits of having the worst-case error probabilities and show that the presented algorithm performs efficient feature selection. The proposed Algorithm~\ref{alg:efficient_mrc} (MRC-CG) is compared with the \ac{LP} formulation for \acp{MRC} in \ref{eq:mrc_linear} using all the features (\mbox{MRC-LP}), and the constraint generation method for L1-\acp{SVM} (\mbox{SVM-CG}) \citep{DedAntEtal:22}. In addition, the quality of the feature selection is compared with \mbox{SVM-CG}, \ac{RFE} for \acp{SVM} \citep{GuyIsaEtal:02}, \ac{MRMR}, and \ac{ANOVA} \citep{DinPen:05,PenLonDin:05}. The experimental results are obtained using 13 real-world \mbox{high-dimensional} datasets as shown in Table~\ref{tb:datasets}. The datasets 4 and 10 can be obtained from the UCI \citep{DuaGra:2019} repository and the remaining from \url{https://jundongl.github.io/scikit-feature/datasets.html}. The implementation of the algorithm proposed is available in the library MRCpy \citep{BonMazPer:23} and the experimental setup in \url{https://github.com/MachineLearningBCAM/Constraint-Generation-for-MRCs}.

\begin{figure*}
     \centering
     \begin{subfigure}[b]{0.49\textwidth}
         \centering
         \psfrag{nfeat}[c][t][0.9]{Number of features}
	 \psfrag{time}[c][t][0.9]{Training time (in secs)}
	 \psfrag{123456789123456789}[l][l][0.8]{MRC-CG}
 	\psfrag{svc-cg}[l][l][0.8]{SVM-CG}
	\psfrag{lp-mrc}[l][l][0.8]{MRC-LP} 	
         \psfrag{2}[][][0.7]{2}
	\psfrag{4}[][][0.7]{4}
	\psfrag{6}[][][0.7]{6}
	\psfrag{0}[][][0.7]{0}
	\psfrag{a}[][][0.7]{$10^{2}$}
	\psfrag{b}[][][0.7]{$10^{3}$}
	\psfrag{c}[][][0.7]{$10^{4}$}
         \includegraphics[width=\textwidth]{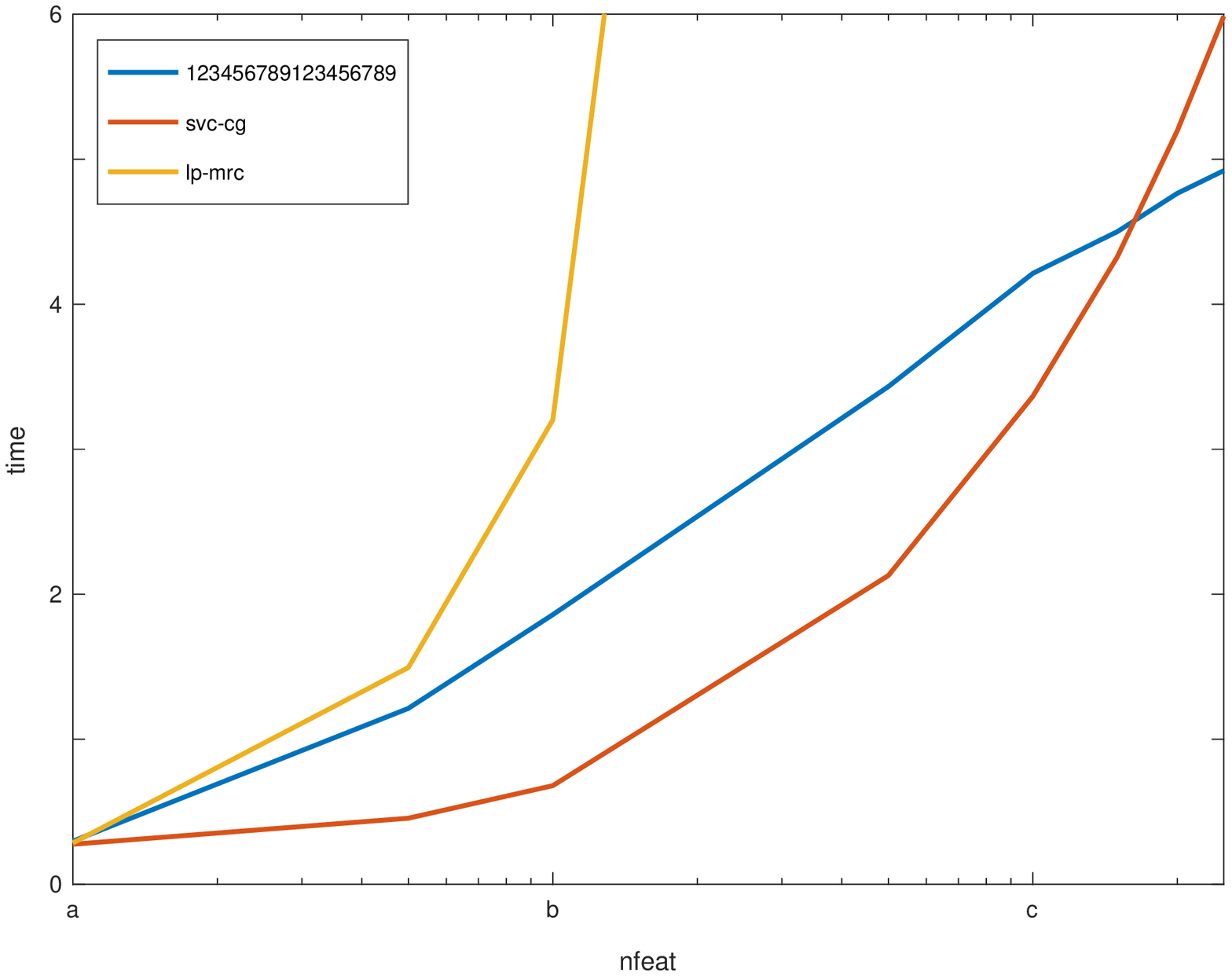}
         \captionsetup{font=small}
         \caption{Ovarian}
         \label{ovarian}
     \end{subfigure}
      \begin{subfigure}[b]{0.49\textwidth}
         \centering
        \psfrag{nfeat}[c][t][0.9]{Number of features}
	 \psfrag{time}[c][t][0.9]{Training time (in secs)}
	\psfrag{123456789123456789}[l][l][0.8]{MRC-CG}
 	\psfrag{svc-cg}[l][l][0.8]{SVM-CG}
	\psfrag{lp-mrc}[l][l][0.8]{MRC-LP} 	
	\psfrag{2}[][][0.7]{2}
	\psfrag{4}[][][0.7]{4}
	\psfrag{6}[][][0.7]{6}
	\psfrag{0}[][][0.7]{0}
	\psfrag{a}[][][0.7]{$10^{2}$}
	\psfrag{b}[][][0.7]{$10^{3}$}
	\psfrag{c}[][][0.7]{$10^{4}$}
         \includegraphics[width=\textwidth]{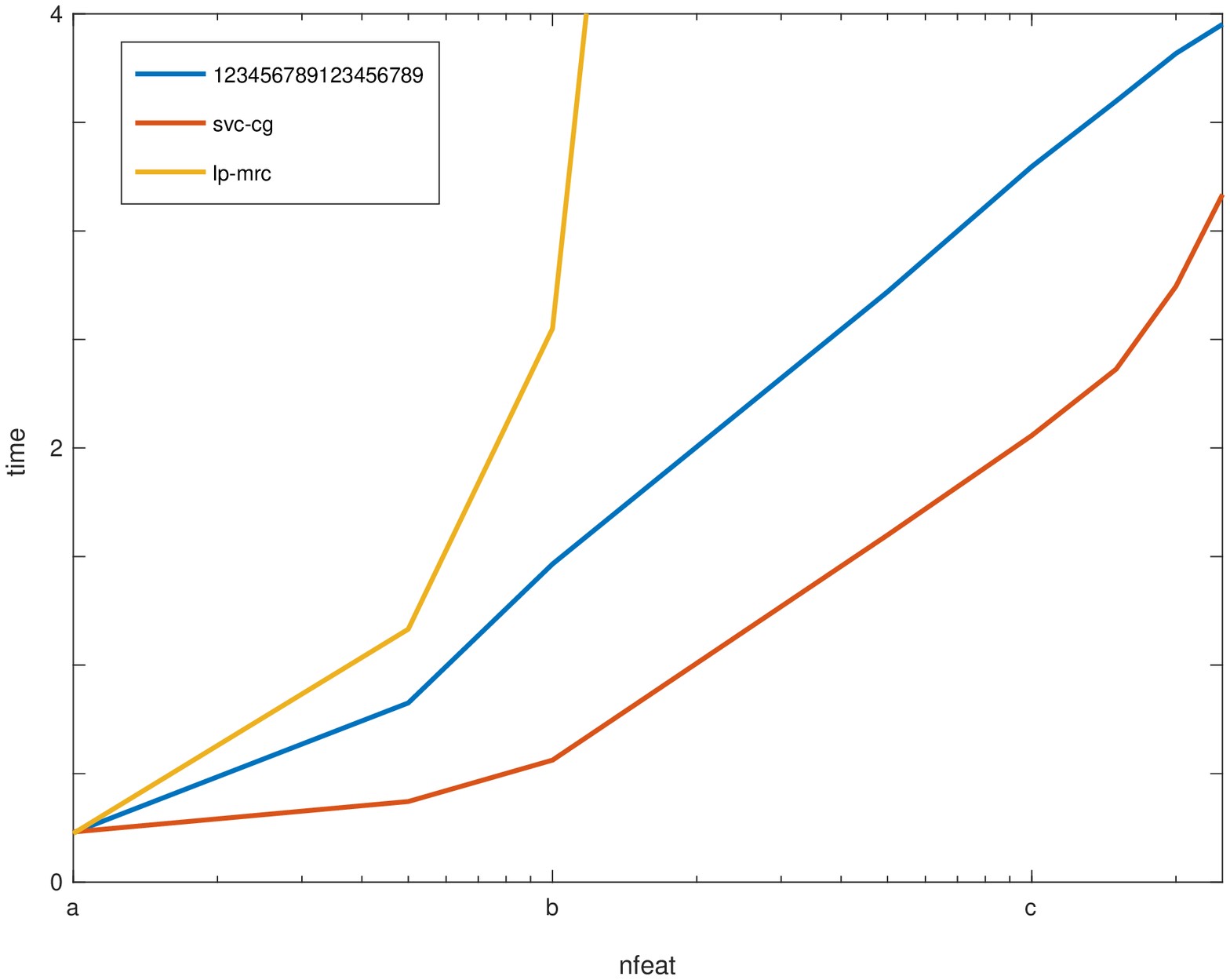}
         \captionsetup{font=small}
         \caption{Arcene}
         \label{arcene}
     \end{subfigure}
    \vskip -0.1in
    \captionsetup{labelfont={it}, labelsep=period, font=small}
    \caption{Comparison of training times (in secs) of \mbox{MRC-CG}, \mbox{MRC-LP}, and \mbox{SVM-CG} for increasing number of \ac{RFF}.}
    \label{fig:times}
\end{figure*}

\paragraph{The hyperparameter $\boldsymbol{\epsilon}$:}
We present results that show the influence of $\epsilon$, and illustrate in practice the theoretical properties of MRC-CG using datasets "Ovarian" and "Arcene". The worst-case error probability $\up{R}^{*}$ is obtained by solving \eqref{eq:mrc_linear} using all the features and the sequence of worst-case error probabilities $\up{R}^{k}$ is obtained by \mbox{MRC-CG} for increasing number of iterations $k$. In Figure~\ref{fig:decreasing_upper_bound}, we show the convergence of $\up{R}^{k}$ to $\up{R}^{*}$ using \mbox{$\epsilon = \{0.01, 0.001, 0.0001\}$} and $n_{\up{max}}=100$. In practice, we observe that $\up{R}^{k}$ converges to $\up{R}^{*}$ with differences atmost in the order of $10^{-3}$ even for $\epsilon=0.01$. In addition, the results show that $\up{R}^{k}$ is monotonically decreasing (as shown in Theorem~\ref{th:mono_dec}) and achieves significant convergence in few iterations. We also observe that the smaller values of $\epsilon$ lead to more accurate results, as shown in Theorem~\ref{th:convergence}. On average, highly accurate results are obtained in 20 iterations using $\epsilon=0.0001$.

\paragraph{The hyperparameter $\boldsymbol{n_{\up{max}}}$:}
We present results that show the influence of the hyperparameter $n_{\up{max}}$ on the training time of \mbox{MRC-CG} in comparison with \mbox{MRC-LP} using datasets "Prostate\textunderscore GE", "Arcene", "Ovarian" and "SMK\textunderscore CAN\textunderscore 187". The parameters of \mbox{MRC-CG} are taken as \mbox{$\epsilon=0.0001$} and $k_{\up{max}}=20$, and the training times are averaged over 50 random repetitions using 90\% of the data. Figure~\ref{fig:efficiency_based_on_feature_selection} presents the relative time computed as the time taken by \mbox{MRC-CG} over the time taken by \mbox{MRC-LP} for values of $n_{\up{max}}$ in the range of 1 to 1,500. The figure shows the trade-off due to the choice of hyperparameter $n_\up{max}$. Increasing the value of $n_\up{max}$ decreases the number of iterations for convergence at the expense of increasing the complexity per iteration. In practice, with $n_\up{max}$ in the interval [50, 500], we obtain a good compromise between the number of iterations required for convergence and the complexity per iteration. On average, for $n_\up{max}=100$, MRC-CG is 10 times faster than MRC-LP. Based on these results, in the remainder of this section we take $n_\up{max}=100$, $\epsilon=0.0001$, and $k_\up{max}=10$.

\paragraph{Scalability with increasing number of features:}
We present results to compare the scalability of \mbox{MRC-CG} with \mbox{MRC-LP}, and \mbox{SVM-CG} for increasing number of features on the datasets "Ovarian" and "Arcene". In particular, the results show the effect of the number of features on the training time of \mbox{MRC-CG} and compare it with \mbox{MRC-LP} and \mbox{SVM-CG}. The number of features range from 100 to 25,000 obtained using \acp{RFF}. Figure~\ref{fig:times} presents the average training times for \mbox{MRC-CG}, \mbox{MRC-LP}, and \mbox{SVM-CG} over 20 random repetitions. We observe that \mbox{MRC-CG} is faster than \mbox{MRC-LP} for all cases and improves the scaling with the number of features significantly better than \mbox{MRC-LP}. In addition, the training times of \mbox{MRC-CG} are competitive with training times of \mbox{SVM-CG}, especially for large number of features.

\begin{table}
 \captionsetup{labelfont={it}, labelsep=period, font=small}
                         \caption{Comparison of training times (in secs) of \mbox{MRC-CG} and \mbox{SVM-CG} along with the worst-case error probabilities $\up{R}^{*}$ and \mbox{cross-validated} error estimate for \mbox{MRC-CG} using multiple high dimensional datasets.}
    \vskip -0.15in
     \label{tb:upper_bound_errors}
\setstretch{1.2}
\begin{center}
\scalebox{0.85}{\begin{tabular}{|c|c|c|c|c|}
\hline
\multirow{2}{*}{Dataset} & \multicolumn{3}{|c|}{\ac{MRC}-CG} & \ac{SVM}-CG \\
\cline{2-5}
& $\up{R}^{*}$  & Error & Time (in secs) & Time (in secs) \\
\hline
1      							& 0.20 			& 0.30 $\pm$ 0.10 	& 3.56 $\pm$ 0.04   	& 8.25 $\pm$ 0.54 \\
2 							& 0.24 			& 0.13 $\pm$ 0.12	& 0.20 $\pm$ 0.02  	& 0.58 $\pm$ 0.07 \\
3 							& 0.16 			& 0.15 $\pm$ 0.09 	& 6.84 $\pm$ 0.04 	& 3.55 $\pm$ 0.35 \\
4   							& 0.07			& 0.07 $\pm$ 0.01 	& 31.4 $\pm$ 0.64 	& 220 $\pm$ 10.5 \\
5      							& 0.13			& 0.18 $\pm$ 0.17    & 1.50 $\pm$ 0.07 	& 1.36 $\pm$ 0.18 \\
6 							& 0.18 			& 0.38 $\pm$ 0.20 	& 4.55 $\pm$ 0.02  	& 2.58 $\pm$ 0.24 \\
7  						 	& 0.11 			& 0.02 $\pm$ 0.05	& 0.70 $\pm$ 0.05 	& 0.37 $\pm$ 0.08 \\
8 							& 0.15			& 0.05 $\pm$ 0.05 	& 44.9 $\pm$ 0.96 	& 16.6 $\pm$ 0.76 \\
9 							& 0.11 			& 0.04 $\pm$ 0.06 	& 5.06 $\pm$ 0.03 	& 1.45 $\pm$ 0.20 \\
10     						& 0.10 			& 0.00 $\pm$ 0.00 	& 3.86 $\pm$ 0.26 	& 3.35 $\pm$ 0.32 \\
11							& 0.14 			& 0.08 $\pm$ 0.07 	& 0.79 $\pm$ 0.09 	& 1.00 $\pm$ 0.10 \\
12 							& 0.23 			& 0.25 $\pm$ 0.11 	& 2.65 $\pm$ 0.13    & 8.9 $\pm$ 0.63 \\
13 							& 0.18 			& 0.03 $\pm$ 0.03 	& 17.7 $\pm$ 0.21 	& 10.5 $\pm$ 0.22 \\
\hline
\end{tabular}}
\end{center}
 \vskip -0.25in
\end{table}

\paragraph{Comparison using real-world datasets:}
Table~\ref{tb:upper_bound_errors} presents results to compare the average training times for \mbox{MRC-CG} and \mbox{SVM-CG} using the 13 high-dimensional datasets. We observe that \mbox{MRC-CG} is competitive with the \mbox{SVM-CG} in terms of training times. In addition, Table~\ref{tb:upper_bound_errors} presents worst-case error probability $\up{R}^{*}$ given by \mbox{MRC-CG} along with the 10-fold cross-validated error estimate. We observe that the worst-case error probability $\up{R}^{*}$ obtained using all the data at training can provide an error assessment for \mbox{MRC-CG} without requiring cross-validation. For instance, the $\up{R}^{*}$ is in the confidence interval of the estimated error in 8 out of 13 datasets, and is an upper bound to the confidence interval of the estimated error in the remaining datasets. This worst-case error probability is particularly useful in the addressed setting since the scarcity of training samples results in a high variability for the error estimates based on cross-validation, as shown in the table.

\begin{table*}
 \captionsetup{labelfont={it}, labelsep=period, font=small}
                         \caption{Comparison of number of features selected and the error estimate obtained for \ac{LR} and \ac{DT} using \mbox{MRC-CG} in comparison with multiple methods for feature selection.}
    \vskip -0.15in
     \label{tb:feature_selection}
\setstretch{1.2}
\begin{center}
\scalebox{0.77}{\begin{tabular}{|c|c|c|c|c|c|c|c|c|c|c|c|c|}
\hline
\multicolumn{1}{|c|}{\multirow{3}{*}{Dataset}} & \multicolumn{3}{c|}{MRC-CG} & \multicolumn{3}{c|}{SVM-CG} & \multicolumn{2}{c|}{RFE} & \multicolumn{2}{c|}{MRMR} & \multicolumn{2}{c|}{ANOVA} \\
 \cline{2-13}
  & \multicolumn{2}{|c|}{Error} & No. of & \multicolumn{2}{|c|}{Error} & No. of & \multicolumn{2}{|c|}{Error} & \multicolumn{2}{|c|}{Error} & \multicolumn{2}{|c|}{Error} \\
  \cline{2-3} \cline{5-6} \cline{8-13}
  & LR & DT & features & LR & DT & features & LR & DT & LR & DT & LR & DT \\
 \hline
1 	& .29 $\pm$ .10  & .33 $\pm$ .06   & 172 $\pm$ 3 & .27 $\pm$ .11 & .41 $\pm$ .09 & 150 $\pm$ 3 & .28 $\pm$ .05 & .31 $\pm$ .12 & .31 $\pm$ .10 & .39 $\pm$ .12 & .27 $\pm$ .07 & .31 $\pm$ .09 \\

2 	& .20 $\pm$ .10  & .19 $\pm$ .10  & \ \ 33 $\pm$ 2 & .19 $\pm$ .10 & .22 $\pm$ .13 & \ \ 32 $\pm$ 1 & .24 $\pm$ .11 & .20 $\pm$ .10 & .22 $\pm$ .10 & .29 $\pm$ .17 & .22 $\pm$ .08 & .26 $\pm$ .16 \\

5 	& .23 $\pm$ .13 & .17 $\pm$ .12 & \ \ 76 $\pm$ 0 & .14 $\pm$ .07 & .29 $\pm$ .11 & \ \ 51 $\pm$ 2 & .20 $\pm$ .11 & .31 $\pm$ .13 & .14 $\pm$ .12 & .20 $\pm$ .11 & .13 $\pm$ .10 & .20 $\pm$ .09 \\

7 	& .04 $\pm$ .06   & .12 $\pm$ .09 & \ \  63 $\pm$ 1 & .02 $\pm$ .05 & .15 $\pm$ .12 & \ \ 37 $\pm$ 2 & .05 $\pm$ .06 & .16 $\pm$ .18 & .04 $\pm$ .06 & .15 $\pm$ .16 & .04 $\pm$ .06 & .16 $\pm$ .12 \\

10	& .00 $\pm$ .00    & .02 $\pm$ .02     & 118 $\pm$ 6 & .00 $\pm$ .00 & .01 $\pm$ .02 & \ \ 33 $\pm$ 2 & .00 $\pm$ .00 & .03 $\pm$ .02 & .00 $\pm$ .00 & .02 $\pm$ .04 & .00 $\pm$ .00 & .01 $\pm$ .02 \\

11	&  .08 $\pm$ .06  & .16 $\pm$ .08 &  \ \ 77 $\pm$ 2 & .10 $\pm$ .08 & .13 $\pm$ .10 & \ \ 45 $\pm$ 1 & .10 $\pm$ .10 & .18 $\pm$ .12 & .08 $\pm$ .07 & .14 $\pm$ .12 & .08 $\pm$ .06 & .16 $\pm$ .14 \\

12    & .29 $\pm$ .12 & .41 $\pm$ .11 & 135 $\pm$ 3 & .27 $\pm$ .09 & .37 $\pm$ .09 & 127 $\pm$ 4 & .33 $\pm$ .08 & .41 $\pm$ .11 & .32 $\pm$ .12 & .39 $\pm$ .09 & .30 $\pm$ .08 & .37 $\pm$ .09 \\

\hline
\end{tabular}}
\end{center}
\vskip -0.25in
\end{table*}

\paragraph{Feature selection:}
Table~\ref{tb:feature_selection} presents results to compare the \mbox{MRC-CG} as a feature selection approach with \mbox{SVM-CG}, \mbox{RFE}, \ac{MRMR}, and \ac{ANOVA}. The table shows the 10-fold \mbox{cross-validated} errors for \acf{LR} and \acf{DT} using the features selected by the different methods. We assess the quality of features selected based on the errors and the number of features selected. Note that the number of features selected by \mbox{RFE}, \ac{MRMR}, and \ac{ANOVA} were set to the number of features selected by \mbox{SVM-CG}. The errors show that the presented \mbox{MRC-CG} provides state-of-the-art results for feature selection and can effectively select the most relevant features.

\section{Conclusion}
In this paper, we presented a learning algorithm for the recently proposed \acfp{MRC} that is efficient in \mbox{high-dimensional} settings. The algorithm utilizes a greedy feature selection approach that iteratively removes and selects features achieving a fast decrease in worst-case error probability while using a small number of features. We prove theoretically that the proposed iterative algorithm obtains a sequence of \acp{MRC} with decreasing worst-case error probabilities that converge to the solution obtained using all the features. The numerical results asses the efficiency of the presented algorithm and compare it with the state-of-the-art using 13 high-dimensional datasets with a large number of features. The results show that the presented algorithm converges to the solution obtained using all the features in a few iterations, and provides a significant efficiency increase, especially in cases with a large number of features. In addition, the algorithm provides the worst-case probability error that is particularly useful in high-dimensional scenarios with limited number of samples, which can suffer from a high variability of assessments based on \mbox{cross-validation}. The results also show that the algorithm can provide state-of-the-art results for feature selection and can effectively select the most relevant features.

\begin{acknowledgements} 
Funding in direct support of this work has been provided by projects PID2019-105058GA-I00, CNS2022-135203, and CEX2021-001142-S funded by MCIN/AEI/10.13039/501100011033 and the European Union “NextGenerationEU”/PRTR, and by programmes ELKARTEK and BERC-2022-2025 funded by the Basque Government.

\end{acknowledgements}

\bibliography{bondugula_793}
\end{document}